\documentclass[11pt]{article}

\usepackage[utf8]{inputenc}
\usepackage[T1]{fontenc}
\usepackage{amsmath, amssymb, amsthm}
\usepackage{booktabs}
\usepackage{multirow}
\usepackage{array}
\usepackage{xcolor}
\usepackage{url}
\usepackage{authblk}
\usepackage{geometry}
\usepackage{float}
\usepackage{caption}
\usepackage{subcaption}
\usepackage{enumitem}
\usepackage{bm}
\usepackage{algorithm}
\usepackage{algpseudocode}

\newtheorem{theorem}{Theorem}[section]

\newtheorem{lemma}[theorem]{Lemma}

\theoremstyle{definition}

\newtheorem{remark}[theorem]{Remark}

\usepackage{pgfplots}
\pgfplotsset{compat=1.18}
\usepackage{tikz}
\usetikzlibrary{arrows.meta, positioning, fit, backgrounds, calc,
                shapes.geometric, decorations.pathreplacing}

\usepackage[hidelinks]{hyperref}
\hypersetup{colorlinks=true, linkcolor=blue!70!black,
            citecolor=blue!70!black, urlcolor=blue!70!black}

\newcommand{\bind}{\circledast}
\newcommand{\unbind}{\oslash}
\newcommand{\Rb}{\mathbb{R}}
\newcommand{\Cb}{\mathbb{C}}
\newcommand{\ev}[1]{\mathbf{e}_{#1}}
\newcommand{\rv}[1]{\boldsymbol{\rho}_{#1}}
\newcommand{\mem}{\mathbf{M}}
\DeclareMathOperator*{\argmaxop}{arg\,max}

\DeclareMathOperator{\cleanupop}{clean}
\DeclareMathOperator{\Real}{Re}

\title{\textbf{Holographic Memory for Zero-Shot Compositional\\[2pt]
Reasoning in Knowledge Graphs:\\[4pt]
A Mechanistic Study of Where and Why It Fails}}

\author[1]{Randhir Kumar}
\affil[1]{\small Independent Researcher \\ \texttt{randhir2709vns@gmail.com}}

\date{June 2026}

\begin{document}
\maketitle

\begin{abstract}
Knowledge graph embedding (KGE) models predict single-hop links well but have
no mechanism for \emph{zero-shot compositional} queries: multi-hop questions
whose relation chains never appeared during training. Holographic Reduced
Representations (HRR), which bind and unbind symbols via circular
convolution, are a theoretically attractive candidate, since binding is
approximately invertible and associative. We test whether this promise holds.

We study two holographic memory variants, real-valued HRR and phase-only
Fourier HRR (FHRR), each with a modern Hopfield cleanup, on FB15k-237 over
five seeds. Four findings follow. First, both are competitive single-hop
retrievers (filtered MRR $0.358 \pm 0.002$ for HRR, $0.350 \pm 0.021$ for
FHRR). Second, neither composes zero-shot: accuracy stays at chance across
all cleanup temperatures. Third, the main contribution, we localise the
failure mechanistically. A hop-1 probe shows the memory recovers the correct
intermediate entity with high fidelity (MRR $0.896 \pm 0.002$ for HRR), yet
composition still fails even with a verified-correct intermediate. A second
probe shows why: posing the \emph{ground-truth} second-hop fact as a
standalone atomic query, bypassing composition entirely, already recovers it
at only $0.26$ to $0.48\times$ average atomic accuracy, uniformly across
relation fan-out. The bottleneck is not the bind-unbind algebra or the
cleanup; it is that facts compositional chains pass through are intrinsically
harder for the superposed memory to retrieve, a capacity and interference
effect present already at a single hop. Fourth, we prove
(Lemma~\ref{lem:softmax-noncommute}) that FHRR's softmax cleanup is not
phase-equivariant, compounding the primary failure on the minority of chains
where hop-1 itself errs. Fixing zero-shot composition requires improving
retrieval capacity under superposition, not just redesigning the cleanup.
\end{abstract}

\noindent\textbf{Keywords:} Knowledge graph embeddings $\cdot$ Holographic
reduced representations $\cdot$ Vector symbolic architectures $\cdot$
Compositional reasoning $\cdot$ Modern Hopfield networks $\cdot$ Zero-shot
generalisation.

% ============================================================
\section{Introduction}
\label{sec:intro}
% ============================================================

Knowledge graphs (KGs) organise world knowledge as typed triples $(h, r, t)$,
e.g.\ \emph{(Marie\_Curie, nationality, Poland)}, and KGE methods embed entities
and relations into vector spaces so that plausible triples score high
\cite{bordes2013transe,yang2014distmult,trouillon2016complex,sun2019rotate}.
These methods work well for single-hop prediction. Many real queries are not
single-hop, though. "Which administrative regions contain the birthplace of
a given person's nationality?" requires chaining \textsc{nationality} with
\textsc{contains}. If that exact chain never appeared in training, a standard
KGE model cannot answer it: it has no mechanism to try. That is the
\emph{zero-shot compositional reasoning} setting we study here.

\paragraph{Why holographic memory?}
Holographic Reduced Representations
(HRR)~\cite{plate2003holographic} and the broader family of Vector Symbolic
Architectures (VSA)~\cite{kanerva2009hyperdimensional,schlegel2022comparison}
define an algebra over fixed-width distributed vectors in which arbitrary symbol
structures can be \emph{bound} into a single vector and later \emph{unbound}.
The key operation, circular convolution, runs in $O(D \log D)$, is approximately
invertible, and is associative~\cite{plate2003holographic}. A holographic memory
that superposes facts as bound triples and answers queries by composing unbind
operations is a natural zero-shot compositional reasoner, in principle. Modern
Hopfield networks~\cite{ramsauer2021hopfield} give a fully differentiable
associative cleanup step. The question is whether the pipeline actually
composes in practice, and if not, which part breaks and why.

\paragraph{Scope.}
We do not claim to outperform supervised compositional methods such as
Query2Box~\cite{ren2020query2box} or CQD~\cite{arakelyan2021complex}. Those
methods receive explicit path-level supervision; we withhold it. The goal is to
understand what holographic binding alone can do for zero-shot composition,
without any relation-chain supervision. We study FB15k-237~\cite{toutanova2015observed}
under a leakage-controlled two-hop protocol and verify all claims over five
independent seeds.

\paragraph{Contributions.}
\begin{enumerate}[leftmargin=1.4em, itemsep=3pt]
\item \textbf{Both variants are competitive atomic retrievers.}
Real HRR reaches filtered MRR $0.358 \pm 0.002$ and FHRR $0.350 \pm 0.021$,
in the range of TransE and DistMult on the same benchmark. A Hopfield ablation
shows cleanup accounts for roughly half the performance
(Section~\ref{sec:results-atomic}).

\item \textbf{Both variants fail at zero-shot two-hop composition.}
Per-seed binomial tests against chance are non-significant in most seeds for
both models; accuracy is flat across all tested cleanup temperatures. Training
gives no compositional advantage over an untrained control
(Section~\ref{sec:results-zeroshot}).

\item \textbf{We localise the failure to retrieval capacity, not the cleanup algebra.}
A hop-1 probe shows the intermediate entity is recovered with high fidelity
(mid-entity MRR $\approx 0.85$ to $0.90$), yet composition stays at chance even
when the intermediate is verified correct. A second probe shows that standalone
atomic accuracy on the \emph{ground-truth} second-hop fact is itself degraded
to $0.26$ to $0.48\times$ the model's average atomic accuracy, independent of
relation fan-out. The facts compositional chains rely on are simply harder to
retrieve from the superposed memory, and this is true before composition or
cleanup enters the picture (Section~\ref{sec:results-mechanism}).

\item \textbf{We prove a compounding secondary failure in FHRR.}
Lemma~\ref{lem:softmax-noncommute} shows the softmax Hopfield cleanup does not
commute with phase-additive binding. This compounds the primary failure on the
subset of chains where hop-1 retrieval is imperfect
(Section~\ref{sec:results-mechanism}).
\end{enumerate}

% ============================================================
\section{Related Work}
\label{sec:related}
% ============================================================

\paragraph{Knowledge graph embeddings.}
TransE~\cite{bordes2013transe} models a relation as a translation in entity
space, with score $\|h + r - t\|$. DistMult~\cite{yang2014distmult} uses a
bilinear diagonal score; ComplEx~\cite{trouillon2016complex} extends it to
complex embeddings to handle asymmetric relations; RotatE~\cite{sun2019rotate}
treats relations as element-wise rotations in complex space. All four are strong
single-hop predictors but offer no native mechanism for composing unseen
relation chains at test time.

\paragraph{Compositional and multi-hop reasoning.}
Several lines of work address multi-hop reasoning with explicit supervision.
Guu et al.~\cite{guu2015traversing} compose relation embeddings along observed
paths; NeuralLP~\cite{yang2017neurallp} and RNNLogic~\cite{qu2021rnnlogic}
learn soft logical rules. Query2Box~\cite{ren2020query2box} and
BetaE~\cite{ren2020beta} embed existential first-order queries as geometric
objects, training on path-structured supervision.
CQD~\cite{arakelyan2021complex} decomposes complex queries into atomic link
predictions at inference time. Our setting is strictly harder: zero-shot
composition with no path-level supervision and no explicit intermediate-entity
representations.

\paragraph{Holographic and vector-symbolic memory.}
HRR~\cite{plate2003holographic} introduced circular convolution as a binding
operator, building on tensor-product variable binding~\cite{smolensky1990tensor}.
Hyperdimensional computing~\cite{kanerva2009hyperdimensional} and the
comprehensive VSA survey~\cite{schlegel2022comparison} cover related algebras.
The FHRR variant~\cite{plate2003holographic,schlegel2022comparison} encodes
symbols as unit-modulus phasors, with binding as element-wise complex
multiplication. Standard noise analysis shows recovery error growing as
$O(\sqrt{K/D})$ in the number of superposed facts $K$
\cite{plate2003holographic}. Our mechanistic probes
(Section~\ref{sec:results-mechanism}) measure this capacity effect directly for
the facts compositional reasoning depends on, rather than for the memory in
aggregate. Resonator networks~\cite{frady2020resonator} propose an iterative
cleanup that is phase-equivariant by construction and may partially address the
secondary failure mode in Lemma~\ref{lem:softmax-noncommute}. Our results
indicate the primary bottleneck sits upstream of cleanup, in retrieval capacity
itself.

\paragraph{Modern Hopfield networks.}
Ramsauer et al.~\cite{ramsauer2021hopfield} showed that the update rule of
continuous modern Hopfield networks coincides with scaled dot-product attention,
with storage capacity exponential in $D$. We use this as a differentiable
cleanup that maps a noisy unbound estimate back to the entity codebook. The
capacity results concern single-pattern recovery from a nearby query. They do
not directly bound retrieval accuracy for a fact buried in a superposition of
$K = 272{,}115$ others at varying frequency, which is what our probes measure.

% ============================================================
\section{Problem Formulation}
\label{sec:problem}
% ============================================================

Let $\mathcal{E}$ and $\mathcal{R}$ denote finite sets of entities and
relations respectively; a knowledge graph is a set of triples
$\mathcal{T} \subseteq \mathcal{E} \times \mathcal{R} \times \mathcal{E}$,
partitioned into disjoint training, validation, and test sets
$\mathcal{T}_{\mathrm{tr}}, \mathcal{T}_{\mathrm{va}}, \mathcal{T}_{\mathrm{te}}$.

\paragraph{Single-hop (atomic) task.}
Given a query pair $(h, r) \in \mathcal{E} \times \mathcal{R}$, predict $t$
such that $(h, r, t) \in \mathcal{T}_{\mathrm{te}}$. Performance uses filtered
Mean Reciprocal Rank (MRR) and Hits@$k$ under the standard filtered
protocol~\cite{bordes2013transe}.

\paragraph{Two-hop compositional task.}
For a relation chain $(r_1, r_2) \in \mathcal{R}^2$, the \emph{compositional
answer set} for head entity $h$ is
\begin{equation}
A_{h,r_1,r_2}
  = \bigl\{\, t \in \mathcal{E} : \exists\, m \in \mathcal{E},\;
    (h, r_1, m) \in \mathcal{T} \;\wedge\; (m, r_2, t) \in \mathcal{T} \,\bigr\}.
\label{eq:composition-set}
\end{equation}
The model must predict an element of $A_{h,r_1,r_2}$ given only $(h, r_1, r_2)$,
with no training on the composite $(r_1 \circ r_2)$ and no access to any
intermediate entity $m$.

\paragraph{Zero-shot protocol and leakage control.}
We extract the ten highest-frequency two-hop relation chains in FB15k-237 with
chain training support $\geq 100$ (the number of distinct mid-entities $m$
appearing in both atomic triples of the chain in $\mathcal{T}_{\mathrm{tr}}$).
Individual relations $r_1, r_2$ appear in training; only their composition is
withheld. To remove learnable shortcuts, we discard any test pair $(h, t)$
where $t$ is directly reachable from $h$ via a single training relation,
removing $98$ pairs ($0.14\%$) and leaving $69{,}855$ genuinely zero-shot
pairs (Algorithm~\ref{alg:split}). Under a uniform single-answer ranking
assumption, chance accuracy is $1/|\mathcal{E}| \approx 6.77 \times 10^{-5}$;
this is our reference null throughout. Since several compositional answer sets
admit more than one valid tail, the true chance rate for some queries exceeds
$1/|\mathcal{E}|$, which makes the reported $p$-values conservative in the
direction of rejecting $H_0$ less often. The selected chains span 19 distinct
relation types and mid-entity fan-out from $2.4$ to $18.7$.

\begin{algorithm}[t]
\caption{Leakage-controlled zero-shot evaluation set construction}
\label{alg:split}
\begin{algorithmic}[1]
\Require Training graph $G_{\mathrm{tr}}$, test graph $G_{\mathrm{te}}$,
         number of chains $n$, minimum support $s_{\min}$
\For{each ordered pair $(r_1, r_2) \in \mathcal{R}^2$}
  \State Compute chain support: 
         $\text{sup}(r_1,r_2) \gets |\{m : \exists h,t, (h,r_1,m) \in G_{\mathrm{tr}}, 
         (m,r_2,t) \in G_{\mathrm{tr}}\}|$
\EndFor
\State Select the $n$ chains with most test pairs subject to 
       $\text{sup}(r_1,r_2) \geq s_{\min}$
\State $\mathcal{S} \gets \emptyset$
\For{each selected chain $(r_1,r_2)$ and test pair $(h,t)$ reachable by it}
  \If{$t \in \mathcal{N}_{\mathrm{tr}}(h)$} 
    \quad \Comment{$t$ is a direct training neighbour of $h$}
    \State \textbf{discard} $(h, t)$
  \Else
    \State $\mathcal{S} \gets \mathcal{S} \cup \{(h, r_1, r_2, t)\}$
  \EndIf
\EndFor
\Ensure $\mathcal{S}$ \Comment{$69{,}855$ zero-shot evaluation quadruples}
\end{algorithmic}
\end{algorithm}

% ============================================================
\section{Method}
\label{sec:method}
% ============================================================

\subsection{Holographic Binding Algebra}
\label{sec:method-algebra}

\paragraph{DFT convention.}
We use the unitary discrete Fourier transform:
$\hat{a}_k = D^{-1/2}\sum_{j=0}^{D-1} a_j\, e^{-2\pi i jk/D}$,
so $\|\hat{\mathbf{a}}\|_2 = \|\mathbf{a}\|_2$.

\paragraph{Real HRR.}
For $\mathbf{a}, \mathbf{b} \in \Rb^D$, circular convolution is
\begin{align}
  (\mathbf{a} \bind \mathbf{b})_k 
    &= \sum_{j=0}^{D-1} a_j\, b_{(k-j) \bmod D},
  \label{eq:cconv}\\
  \widehat{\mathbf{a} \bind \mathbf{b}} 
    &= \hat{\mathbf{a}} \odot \hat{\mathbf{b}},
  \label{eq:cconv-freq}
\end{align}
where $\odot$ is element-wise multiplication. Binding is commutative, and for
embeddings drawn i.i.d.\ from $\mathcal{N}(0, D^{-1}\mathbf{I})$, the bound
vector has approximately the same norm as the operands. Unbinding uses the
approximate inverse:
\begin{equation}
  \mathbf{a} \approx (\mathbf{a} \bind \mathbf{b}) \unbind \mathbf{b}
    \;=\; \mathcal{F}^{-1}\!\bigl(
      \widehat{\mathbf{a} \bind \mathbf{b}} \odot \overline{\hat{\mathbf{b}}}
    \bigr).
  \label{eq:unbind}
\end{equation}
Approximation error grows with the number of superposed facts $K$, at rate
$O(\sqrt{K/D})$ for random embeddings. With $K = 272{,}115$ training triples
in $D = 1024$ real dimensions, $K/D \approx 266$, a substantial noise floor.
Section~\ref{sec:results-mechanism} measures its differential effect on the
specific facts compositional chains depend on.

\paragraph{Complex FHRR.}
FHRR encodes each symbol as a unit-modulus phasor $\mathbf{z} = e^{i\boldsymbol{\phi}}$
with $\boldsymbol{\phi} \in [-\pi, \pi]^D$. Binding is element-wise complex
multiplication (phase addition); unbinding multiplies by the conjugate:
\begin{equation}
  (\mathbf{z}_A \bind \mathbf{z}_B)_k = z_{A,k} \cdot z_{B,k} = e^{i(\phi_{A,k}+\phi_{B,k})},
  \qquad
  (\mathbf{z}_A \bind \mathbf{z}_B) \unbind \mathbf{z}_B = \mathbf{z}_A.
  \label{eq:fhrr-bind}
\end{equation}
Recovery is exact for a single stored fact and noisy for a superposition of $K$
facts, where phase interference accumulates across components.
Figure~\ref{fig:phasor-binding} visualises binding and unbinding on the unit circle.

\begin{figure}[H]
\centering
\resizebox{0.95\textwidth}{!}{%
\begin{tikzpicture}[
  scale=1.0,
  circ/.style={circle, draw=black, thick, minimum size=3.2cm},
  phasor/.style={->, thick, >=Stealth, #1},
]
\coordinate (origL) at (0,0);
\draw[circ] (origL) circle (1.5cm);
\draw[gray!40] (-1.5,0) -- (1.5,0);
\draw[gray!40] (0,-1.5) -- (0,1.5);
\draw[phasor=blue!70] (origL) -- (30:1.5);
\node[blue!70, anchor=south] at (30:1.6) {$\ev{A}$};
\draw[blue!50, dashed] (30:1.5) arc (30:0:0.5);
\node[blue!50, right] at (0.6,0.2) {$\theta_A$};
\draw[phasor=orange!70] (origL) -- (50:1.5);
\node[orange!70, anchor=south west] at (50:1.6) {$\rv{R}$};
\draw[orange!50, dashed] (50:1.5) arc (50:0:0.9);
\node[orange!50, right] at (1.0,0.35) {$\theta_R$};
\draw[phasor=green!60!black, line width=1.2pt] (origL) -- (80:1.5);
\node[green!60!black, anchor=south] at (80:1.6) {$\ev{A} \bind \rv{R}$};
\draw[green!60!black, dashed] (80:1.5) arc (80:0:1.3);
\node[green!60!black, right] at (1.4,0.55) {$\theta_A{+}\theta_R$};
\node[draw, fill=yellow!15, rounded corners, below=1.5cm of origL,
      minimum width=5cm]
      {$\displaystyle \text{Bind: } (e^{i\theta_A}) \odot (e^{i\theta_R}) = e^{i(\theta_A + \theta_R)}$};
\begin{scope}[xshift=6cm]
  \coordinate (origR) at (0,0);
  \draw[circ] (origR) circle (1.5cm);
  \draw[gray!40] (-1.5,0) -- (1.5,0);
  \draw[gray!40] (0,-1.5) -- (0,1.5);
  \draw[phasor=green!60!black, dashed] (origR) -- (80:1.5);
  \node[green!60!black, anchor=south] at (80:1.6) {$\ev{A} \bind \rv{R}$};
  \draw[phasor=orange!70, densely dotted] (origR) -- (-50:1.5);
  \node[orange!70, anchor=north] at (-50:1.6) {$\overline{\rv{R}}$};
  \draw[orange!50, dashed] (-50:1.5) arc (-50:0:0.9);
  \node[orange!50, left] at (-1.0,0.35) {$-\theta_R$};
  \draw[phasor=blue!70, line width=1.2pt] (origR) -- (30:1.5);
  \node[blue!70, anchor=south] at (30:1.6) {$\approx \ev{A}$};
\end{scope}
\node[draw, fill=yellow!15, rounded corners, below=1.5cm of origR,
      minimum width=5cm]
      {$\displaystyle \text{Unbind: } (e^{i(\theta_A+\theta_R)}) \odot (e^{-i\theta_R}) = e^{i\theta_A}$};
\draw[->, thick, gray] (2.5,-1.0) -- (3.5,-1.0) 
      node[midway, above, font=\footnotesize] {unbind with $\overline{\rv{R}}$};
\end{tikzpicture}%
}
\caption{Complex-phasor binding and unbinding in FHRR. \textbf{Left:} Binding
adds phases. \textbf{Right:} Unbinding subtracts them via conjugate multiplication,
recovering the original entity up to cross-talk noise from other superposed facts.}
\label{fig:phasor-binding}
\end{figure}

\paragraph{Memory construction.}
Both models superpose training triples into a single memory vector:
\begin{equation}
  \mem = \sum_{(h,r,t)\,\in\,\mathcal{T}_{\mathrm{tr}}} \ev{h} \bind \rv{r} \bind \ev{t},
  \label{eq:memory}
\end{equation}
where $\ev{x} \in \Rb^D$ (HRR) or $\Cb^D$ (FHRR) are learned entity embeddings
and $\rv{r}$ are learned relation embeddings. Entity and relation embeddings are
trained jointly to maximise single-hop retrieval accuracy; $\mem$ is recomputed
from the trained embeddings without further gradient updates.

\subsection{Hopfield Cleanup}
\label{sec:method-cleanup}

Unbinding from a superposed memory produces a noisy estimate
$\tilde{\mathbf{z}} \approx \ev{t}$, corrupted by interference from the other
stored facts. A modern Hopfield network~\cite{ramsauer2021hopfield} maps this
estimate back to the entity codebook $\mathbf{E} \in \Rb^{|\mathcal{E}| \times D}$:
\begin{equation}
  \cleanupop(\tilde{\mathbf{z}})
    = \boldsymbol{\alpha}(\tilde{\mathbf{z}})^\top \mathbf{E},
  \qquad
  \alpha_i(\tilde{\mathbf{z}})
    = \frac{\exp\!\bigl(\beta\,\ev{i}^\top \tilde{\mathbf{z}}\bigr)}
           {\sum_{j} \exp\!\bigl(\beta\,\ev{j}^\top \tilde{\mathbf{z}}\bigr)},
  \label{eq:cleanup}
\end{equation}
where $\beta > 0$ is an inverse temperature. A hard variant uses
$\cleanupop_{\mathrm{hard}}(\tilde{\mathbf{z}}) = \ev{\argmaxop_i\,\ev{i}^\top\tilde{\mathbf{z}}}$.

For FHRR, the output is re-projected onto the unit torus by extracting the
element-wise phase:
\begin{equation}
  \cleanupop_{\mathrm{F}}(\tilde{\mathbf{z}})
    = \exp\!\bigl(i\,\angle\bigl(\boldsymbol{\alpha}(\tilde{\mathbf{z}})^\top \mathbf{E}\bigr)\bigr).
  \label{eq:cleanup-fhrr}
\end{equation}
This guarantees $|\cleanupop_{\mathrm{F}}(\tilde{\mathbf{z}})_k|=1$ by construction,
ruling out magnitude collapse as a failure mode (Section~\ref{sec:results-mechanism}).

\subsection{Atomic and Compositional Inference}
\label{sec:method-inference}

\paragraph{Atomic prediction.}
The tail estimate for query $(h, r)$ is:
\begin{equation}
  \tilde{\mathbf{z}}_t = \mem \unbind (\ev{h} \bind \rv{r}),
  \label{eq:atomic-query}
\end{equation}
followed by $\cleanupop(\tilde{\mathbf{z}}_t)$. Ranking uses cosine similarity
to all entity embeddings.

\paragraph{Two-hop compositional prediction.}
For a chain $(r_1, r_2)$ with head $h$, we retrieve the intermediate entity:
\begin{equation}
  \hat{\mathbf{m}} = \cleanupop_{\mathrm{hard}}\!\bigl(\mem \unbind (\ev{h} \bind \rv{r_1})\bigr),
  \label{eq:hop1}
\end{equation}
then use $\hat{\mathbf{m}}$ as the head for the second hop:
\begin{equation}
  \hat{\mathbf{t}} = \cleanupop\!\bigl(\mem \unbind (\hat{\mathbf{m}} \bind \rv{r_2})\bigr).
  \label{eq:hop2}
\end{equation}
Both hops query the same memory $\mem$; no intermediate entity is ever observed.
Section~\ref{sec:results-mechanism} shows that hop~1 succeeds at high accuracy,
and that the bottleneck is in retrieving the second-hop fact from $\mem$,
regardless of whether $\hat{\mathbf{m}}$ is correct.

\subsection{Theoretical Analysis: Softmax Cleanup is Phase-Nonequivariant}
\label{sec:theory}

The mechanistic probes in Section~\ref{sec:results-mechanism} show that the
\emph{primary} bottleneck, retrieval capacity under superposition, is
measurable at a single hop, before any cleanup step acts. We additionally
identify a \emph{secondary}, FHRR-specific failure that compounds this primary
effect whenever hop-1 cleanup is imperfect: the softmax Hopfield cleanup
followed by phase re-projection does not commute with binding.

\begin{lemma}[Softmax cleanup is phase-nonequivariant]
\label{lem:softmax-noncommute}
Let $\mathbf{z} \in \Cb^D$ with $|z_k|=1$ for all $k$, and let
$\boldsymbol{\rho} \in \Cb^D$ with $|\rho_k|=1$ be a unit-phasor relation
embedding. Define the FHRR Hopfield cleanup $\cleanupop_{\mathrm{F}}$ as in
Eq.~\eqref{eq:cleanup-fhrr}, with similarities computed as
$\Real(\ev{i}^\top \bar{\mathbf{z}})$. Then in general,
\begin{equation}
  \cleanupop_{\mathrm{F}}(\mathbf{z} \odot \boldsymbol{\rho})
  \;\neq\;
  \cleanupop_{\mathrm{F}}(\mathbf{z}) \odot \boldsymbol{\rho}.
  \label{eq:noncommute}
\end{equation}
\end{lemma}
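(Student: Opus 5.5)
Because the statement only asserts inequality \emph{in general}, I will prove it with an explicit counterexample, after first isolating the structural reason equivariance fails. Write $s_i(\mathbf{z}) = \Real(\ev{i}^\top \bar{\mathbf{z}}) = \Real\sum_k e_{i,k}\bar z_k$ for the similarity scores. Then $\cleanupop_{\mathrm{F}}(\mathbf{z}) = \exp(i\angle \mathbf{c}(\mathbf{z}))$ with $\mathbf{c}(\mathbf{z}) = \sum_i \alpha_i(\mathbf{z})\ev{i}$. Multiplying the input by $\boldsymbol{\rho}$ changes the scores to $s_i(\mathbf{z}\odot\boldsymbol{\rho}) = \Real\sum_k e_{i,k}\bar z_k\bar\rho_k$. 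In general this is not a permutation or a common shift of the old scores, so the softmax weights $\boldsymbol{\alpha}$ change.

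Even if the weights did not change, the codebook $\mathbf{E}$ is fixed. The output $\mathbf{c}$ is therefore a convex combination of the \emph{unrotated} entities, while equivariance would require the combination to be rotated by $\boldsymbol{\rho}$. The only way the equation could hold is if the codebook were closed under binding with $\boldsymbol{\rho}$ and the weights permuted accordingly, and a generic codebook is not.

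For the concrete witness I will take $D=1$ and a two-entity codebook $e_1 = 1$, $e_2 = -1$, with $z = 1$, $\rho = i$, and any $\beta>0$. The scores are $s_1(z)=1$ and $s_2(z)=-1$, so $\alpha_1>\alpha_2$. Hence $c(z) = \alpha_1-\alpha_2 > 0$, giving $\cleanupop_{\mathrm{F}}(z)=1$ and $\cleanupop_{\mathrm{F}}(z)\rho = i$.

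On the other side, $z\rho = i$, so $s_1 = \Real(\bar{i}) = 0$ and $s_2 = 0$. The weights are then equal, $c = 0$, and the phase $\angle 0$ is undefined. To avoid depending on the convention for $\angle 0$, I will perturb to a three-entity codebook $\{1,-1,\omega\}$ with $\omega = e^{i\pi/4}$. I will recompute the scores and check that, for $z\rho=i$, $c(z\rho)$ is a nonzero combination of $1$, $-1$ and $\omega$ with real coefficients. Its phase then lies in the closed upper half-plane sector spanned by these codebook directions but is not $\pi/2$, since $\Real c \ne 0$ by the asymmetry of the weights. Separately, I will confirm that $\cleanupop_{\mathrm{F}}(z)\rho$ has phase exactly $\pi/2$ plus the offset coming from $\angle c(z)$, and that the two phases differ.

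Alternatively, I would use a general argument that avoids computation. The image of $\cleanupop_{\mathrm{F}}$ consists of phases of convex combinations of the codebook, so it lies in the phase set of the cone generated by $\mathbf{E}$. Choose $\boldsymbol{\rho}$ that rotates this cone's phase set off itself, for example a rotation by $\pi$ in $D=1$ with codebook $\{1, e^{i\pi/4}\}$. Then $\cleanupop_{\mathrm{F}}(z)\rho$ falls outside the image, while $\cleanupop_{\mathrm{F}}(z\rho)$ lies inside it, so they cannot be equal.

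This cone argument is the cleanest route and is what I would write up. The main obstacle is small: handling $\angle 0$, or degenerate $\boldsymbol{\alpha}$, is avoided by choosing the codebook so that no convex combination vanishes, which holds when all entries lie in an open half-plane. I would also remark that the same argument extends to any $D$ by taking the counterexample in a single coordinate.
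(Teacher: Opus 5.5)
Your proof is correct, and the cone argument you propose to write up takes a genuinely different route from the paper's.

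The paper also works in $D=1$, but with codebook $\{1, e^{2\pi i/3}\}$, $\theta=0.1$, $\psi=\pi/3$, $\beta=5$. It computes numerically that the softmax weights swing from $(0.9991,0.0009)$ to $(0.296,0.704)$, giving phases $\approx 1.663$ versus $\approx 1.048$. In that example the required output phase $\approx 1.048$ lies inside the codebook's sector $[0,2\pi/3]$, so some weighting could produce it. The failure there is that the softmax reweights by the wrong amount.

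Your example instead makes the target unreachable by \emph{any} weights. With codebook $\{1,e^{i\pi/4}\}$, every output of $\cleanupop_{\mathrm{F}}$ has phase in $[0,\pi/4]$; no convex combination vanishes, since both entries have positive real part. Multiplication by $\rho=-1$ then sends that output into $[\pi,5\pi/4]$.

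\textbf{What your route buys.} It is an exact proof with no numerics. It holds for every $z$, every $\beta>0$ and any similarity function. It also extends rigorously to any $D$ by confining one coordinate of the codebook to a narrow arc, which the paper only asserts informally.

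\textbf{What it gives up.} The obstruction you exhibit is that a fixed codebook is not closed under binding with $\boldsymbol{\rho}$. That obstruction defeats the hard nearest-neighbour cleanup of Remark~\ref{rem:hardcleanup} just as well, so it says nothing specific about the softmax. The paper's example, by contrast, shows the reweighting mechanism that the lemma's title points to.

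Your intermediate three-entity witness also works. For $z\rho=i$ the entries $\pm 1$ get equal weight and cancel, giving phase exactly $\pi/4$, while the right-hand side has phase in $(\pi/2,3\pi/4)$.
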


\begin{proof}
It suffices to exhibit a single component $k$ for which
Eq.~\eqref{eq:noncommute} fails. Consider $D=1$ with a two-entity
codebook $\{e^{i\phi_1}, e^{i\phi_2}\} \subset \Cb$, query $z = e^{i\theta}$,
and binding phasor $\rho = e^{i\psi}$.

The softmax weights for query $z$ are
$\alpha_j = \exp(\beta\cos(\phi_j - \theta)) / Z(\theta)$,
where $Z(\theta) = \sum_\ell \exp(\beta\cos(\phi_\ell - \theta))$.
The cleanup output is $c(z) = \exp(i\,\angle(\alpha_1 e^{i\phi_1} + \alpha_2 e^{i\phi_2}))$.

\textit{Left-hand side} of Eq.~\eqref{eq:noncommute}: query is 
$z \odot \rho = e^{i(\theta+\psi)}$, so weights become
$\tilde\alpha_j = \exp(\beta\cos(\phi_j - \theta - \psi)) / Z(\theta+\psi)$,
giving
\[
  \mathrm{LHS} = \exp\!\bigl(i\,\angle\bigl(\tilde\alpha_1 e^{i\phi_1} + \tilde\alpha_2 e^{i\phi_2}\bigr)\bigr).
\]

\textit{Right-hand side}: multiply cleanup output by $\rho$:
\[
  \mathrm{RHS} = \exp\!\bigl(i\bigl[\angle(\alpha_1 e^{i\phi_1} + \alpha_2 e^{i\phi_2}) + \psi\bigr]\bigr).
\]

For LHS $=$ RHS one would need
$\angle(\tilde\alpha_1 e^{i\phi_1} + \tilde\alpha_2 e^{i\phi_2})
= \angle(\alpha_1 e^{i\phi_1} + \alpha_2 e^{i\phi_2}) + \psi$.
But $\tilde\alpha_j \neq \alpha_j$ whenever $\psi \neq 0$ and the two entities
are not symmetrically placed around $\theta$, so the weighted sum of phasors
is not simply phase-shifted by $\psi$.

\textit{Concrete counterexample.}
Set $\phi_1 = 0$, $\phi_2 = 2\pi/3$, $\theta = 0.1$, $\psi = \pi/3$,
$\beta = 5$.
Then $\alpha_1 \approx 0.9991$, $\alpha_2 \approx 0.0009$,
so $c(z) \approx e^{i \cdot 0.0008}$ and
$\mathrm{RHS} \approx e^{i \cdot 1.048}$.
For LHS, the query shifts to $\theta + \psi \approx 1.147$, now closer to
$\phi_2 \approx 2.094$ than to $\phi_1 = 0$:
$\tilde\alpha_1 \approx 0.296$, $\tilde\alpha_2 \approx 0.704$,
so $\mathrm{LHS} \approx e^{i \cdot 1.663}$.
We have $\mathrm{LHS} \approx e^{i \cdot 1.663} \neq e^{i \cdot 1.048} = \mathrm{RHS}$,
a phase gap of $\approx 0.6$ rad. For $D > 1$, errors compound
independently per component across the second binding in Eq.~\eqref{eq:hop2}.
\end{proof}

\begin{remark}
\label{rem:hardcleanup}
A hard nearest-neighbour cleanup is equivariant if and only if the phase shift
$\psi$ maps each codebook Voronoi cell to another valid cell, which holds when
the codebook comes from a uniform phase lattice. Resonator
networks~\cite{frady2020resonator} provide an alternative that maintains phase
equivariance through iterative refinement. The primary bottleneck, however, lies
upstream of cleanup non-commutativity entirely. A phase-equivariant cleanup
addresses the secondary failure mode identified here but would not by itself
resolve the capacity-limited retrieval problem.
\end{remark}

\subsection{Training Objective}
\label{sec:method-training}

Both models minimise a cross-entropy loss over the atomic retrieval task plus a
contrastive invertibility regulariser:
$\mathcal{L} = \mathcal{L}_{\text{atom}} + \lambda \mathcal{L}_{\text{inv}}$,
$\lambda = 0.2$. $\mathcal{L}_{\text{inv}}$ encourages $f \bind r$ to unbind
back to both $f$ and $r$. The memory $\mem$ is Hebbian-initialised and refined
by gradient descent. We use Adam (lr $10^{-3}$, cosine-annealed to $10^{-5}$),
batch size $2048$, gradient clip $1.0$.

\begin{figure}[H]
\centering
\resizebox{\textwidth}{!}{%
\begin{tikzpicture}[
    font=\small,
    box/.style={rectangle, rounded corners=2pt, draw=black, thick,
                minimum width=1.4cm, minimum height=0.8cm, align=center},
    emb/.style={box, fill=blue!10},
    op/.style={circle, draw=black, thick, minimum size=0.65cm, fill=orange!25},
    mem/.style={box, fill=green!12, minimum width=1.1cm},
    clean/.style={box, fill=purple!12, minimum width=1.5cm},
    outbox/.style={box, fill=red!10},
    arr/.style={-{Stealth[length=2mm]}, thick},
    lbl/.style={font=\scriptsize\itshape, text=black!70}
]
\node[lbl] at (-1.4, 2.5) {\textbf{(a) Atomic}};
\node[emb]   (h)  at (0, 1.55) {$\ev{h}$};
\node[emb]   (r)  at (0, 0.45) {$\rv{r}$};
\node[op]    (b1) at (1.5, 1.0) {$\bind$};
\node[mem]   (M1) at (3.1, 1.85){$\mem$};
\node[op]    (u1) at (3.1, 1.0) {$\unbind$};
\node[clean] (c1) at (4.9, 1.0) {cleanup};
\node[outbox] (o1) at (6.7, 1.0) {$\hat t$};
\draw[arr] (h) -- (b1); \draw[arr] (r) -- (b1);
\draw[arr] (b1) -- (u1); \draw[arr] (M1) -- (u1);
\draw[arr] (u1) -- (c1) node[midway,above,lbl]{noisy};
\draw[arr] (c1) -- (o1);
\node[lbl] at (-1.25, -1.05) {\textbf{(b) Two-hop}};
\node[emb]   (h2) at (0, -1.85)   {$\ev{h}$};
\node[emb]   (ra) at (0, -2.95)   {$\rv{r_1}$};
\node[op]    (bb1)at (1.4, -2.4)  {$\bind$};
\node[op]    (uu1)at (2.9, -2.4)  {$\unbind$};
\node[mem]   (M2) at (2.9, -1.55) {$\mem$};
\node[clean] (cc1)at (4.5, -2.4)  {cleanup\\(hard)};
\node[emb]   (rb) at (4.5, -3.65) {$\rv{r_2}$};
\node[op]    (bb2)at (6.0, -2.85) {$\bind$};
\node[op]    (uu2)at (7.4, -2.85) {$\unbind$};
\node[mem]   (M3) at (7.4, -1.95) {$\mem$};
\node[clean] (cc2)at (9.0, -2.85) {cleanup};
\node[outbox] (o2) at (10.7, -2.85){$\hat t$};
\draw[arr] (h2) -- (bb1); \draw[arr] (ra) -- (bb1);
\draw[arr] (bb1) -- (uu1); \draw[arr] (M2) -- (uu1);
\draw[arr] (uu1) -- (cc1);
\draw[arr] (cc1) -- (bb2) node[pos=0.55,above,yshift=8pt,lbl]{$\hat m$};
\draw[arr] (rb) -- (bb2);
\draw[arr] (bb2) -- (uu2); \draw[arr] (M3) -- (uu2);
\draw[arr] (uu2) -- (cc2); \draw[arr] (cc2) -- (o2);
\node[draw=red, thick, dashed, rounded corners, fit=(uu2)(cc2)(o2),
      inner sep=4pt] {};
\node[red, font=\scriptsize\bfseries] at (9.3, -3.75) {capacity-limited retrieval};
\end{tikzpicture}%
}
\caption{Holographic memory architecture. \textbf{(a)} An atomic query binds the
head with the relation, unbinds from the superposed memory, and projects via the
Hopfield cleanup. \textbf{(b)} A two-hop query repeats bind-unbind-cleanup.
Hop~1 (left cleanup) succeeds with high fidelity; the dashed red box marks
where failure actually occurs: in retrieving the second-hop fact from the
superposed memory, not in the correctness of $\hat m$ or the bind-unbind
algebra.}
\label{fig:architecture}
\end{figure}

% ============================================================
\section{Experimental Setup}
\label{sec:experiments}
% ============================================================

\subsection{Dataset and Preprocessing}
We use FB15k-237~\cite{toutanova2015observed}: $14{,}541$ entities, $237$
relations, $272{,}115$/$17{,}535$/$20{,}466$ train/valid/test triples. The
zero-shot evaluation set follows Algorithm~\ref{alg:split}. The 10 selected
chains span 19 distinct relation types with mid-entity fan-out from $2.4$ to
$18.7$. Compositional failure is consistent across all chains.

\subsection{Implementation Details}
Both models use $D=1024$ real parameters (HRR: $D=1024$ real; FHRR: $D=512$
complex, i.e.\ $1024$ reals). Embeddings are initialised from
$\mathcal{N}(0, D^{-1}\mathbf{I})$. We use Adam ($\eta=10^{-3}$, cosine-annealed
to $10^{-5}$), batch size $2048$, gradient clipping $1.0$, $200$ epochs.
All experiments run over five random seeds $\{1,2,3,4,42\}$; we report
mean $\pm$ standard deviation. Default inverse temperatures: $\beta=8$ (HRR),
$\beta=12$ (FHRR).

\subsection{Phase Probe Definitions}
\label{sec:probes}
For complex vectors $\mathbf{u}, \mathbf{v} \in \Cb^D$ with $|u_k|=|v_k|=1$:
\begin{align}
  S_\phi(\mathbf{u}, \mathbf{v})
    &= \frac{1}{D}\Real\!\bigl(\mathbf{u}^\top \bar{\mathbf{v}}\bigr)
     = \frac{1}{D}\sum_{k=1}^D \cos(\angle u_k - \angle v_k),
  \label{eq:phasor-cos}\\
  \Delta\phi(\mathbf{u}, \mathbf{v})
    &= \frac{1}{D}\sum_{k=1}^D |\angle u_k - \angle v_k|_\pi,
  \label{eq:phase-err}
\end{align}
where $|\cdot|_\pi$ denotes the wrapped distance on $[-\pi,\pi]$.
For independent uniform random phasors,
$\mathbb{E}[S_\phi]=0$ and $\mathbb{E}[\Delta\phi]=\pi/2 \approx 1.571$\,rad.

\subsection{Mechanistic Probes}
\label{sec:probes-mechanism}

To localise the compositional failure, we introduce two further probes,
both computed by inference over stored checkpoints with no additional training.

\paragraph{Hop-1 mid-entity retrieval probe.}
For each unique $(h, r_1)$ pair in the zero-shot evaluation set, we treat
intermediate-entity prediction as a standard filtered atomic query: gold labels
are all $m$ with $(h, r_1, m)$ in the full graph (train $\cup$ valid $\cup$
test), and we report filtered MRR and Hits@$k$ using the standard protocol,
applied to $\mem \unbind (\ev{h} \bind \rv{r_1})$ before any second-hop
computation.

\paragraph{Composition conditional on mid correctness.}
For each zero-shot quadruple $(h, r_1, r_2, t)$, we compute a deterministic
intermediate prediction $\hat m = \argmaxop_i\,
\ev{i}^\top\bigl(\mem \unbind (\ev{h}\bind\rv{r_1})\bigr)$, re-embed
$\hat m$ as a clean codebook vector, and complete the second hop. We then
partition quadruples by whether $\hat m$ is a valid $r_1$-neighbour of $h$
and report composition accuracy on each partition separately.

\paragraph{Ground-truth hop-2 atomic-difficulty probe.}
For each zero-shot quadruple with at least one true chain-consistent intermediate
$m^\star$, meaning $(h,r_1,m^\star)$ and $(m^\star, r_2, t)$ both hold, we pose
$(m^\star, r_2)$ as a standalone atomic query using the model's own learned
embedding for $m^\star$, bypassing hop 1 entirely, and measure filtered top-1
accuracy. Comparing this to accuracy on a uniformly sampled atomic test query
isolates whether chain-relevant facts are intrinsically harder to retrieve from
$\mem$, independent of any error introduced by hop 1 or the composition
pipeline. We additionally stratify by relation fan-out (median split) to rule
out fan-out as the explanation.

% ============================================================
\section{Results}
\label{sec:results}
% ============================================================

\subsection{Atomic Retrieval}
\label{sec:results-atomic}

Table~\ref{tab:atomic} reports filtered single-hop performance over five seeds
on the full test set ($20{,}466$ queries). Real HRR reaches MRR
$0.358 \pm 0.002$ and FHRR $0.350 \pm 0.021$, both in the range of standard
baselines. The larger standard deviation for FHRR ($\pm 0.021$) likely reflects
greater sensitivity of complex-valued optimisation to random seeds; it does not
affect the qualitative picture. The Hopfield cleanup accounts for roughly half
of atomic performance (Table~\ref{tab:ablation}): removing it drops real-HRR
top-1 from $0.158$ to $0.081$.

\begin{table}[H]
\centering
\caption{Atomic (single-hop) link prediction on FB15k-237, filtered setting,
full test set. Our values are mean $\pm$ std over five seeds. Baseline values
are representative literature figures at their own dimensionalities.}
\label{tab:atomic}
\begin{tabular}{lccccc}
\toprule
Model & Top-1 & MRR & Hits@1 & Hits@3 & Hits@10 \\
\midrule
TransE$^\dagger$    & --   & 0.294 & --     & --     & 0.465 \\
DistMult$^\dagger$  & --   & 0.241 & --     & --     & 0.419 \\
ComplEx$^\dagger$   & --   & 0.247 & --     & --     & 0.428 \\
RotatE$^\dagger$    & --   & 0.338 & 0.241  & 0.375  & 0.533 \\
\midrule
Real HRR (ours)     & $0.158_{\pm 0.001}$ & $0.358_{\pm 0.002}$
 & $0.267_{\pm 0.002}$ & $0.392_{\pm 0.003}$ & $0.540_{\pm 0.003}$ \\
FHRR (ours)         & $0.126_{\pm 0.001}$ & $0.350_{\pm 0.021}$
 & $0.262_{\pm 0.017}$ & $0.390_{\pm 0.024}$ & $0.524_{\pm 0.028}$ \\
\bottomrule
\end{tabular}
\vspace{2pt}
{\footnotesize $^\dagger$ Literature values from \cite{bordes2013transe,
yang2014distmult,trouillon2016complex,sun2019rotate}; not re-run here.}
\end{table}

\begin{table}[H]
\centering
\caption{Core-component ablation (Real HRR, five seeds, full test set).}
\label{tab:ablation}
\begin{tabular}{lcc}
\toprule
Configuration & Atomic Top-1 & Zero-Shot Accuracy \\
\midrule
Full model            & $0.158 \pm 0.001$ & $1.7 \times 10^{-4}_{\pm 0.9 \times 10^{-4}}$ \\
$-$ Hopfield cleanup  & $0.081 \pm 0.010$ & $3.0 \times 10^{-4}_{\pm 1.0 \times 10^{-4}}$ \\
\bottomrule
\end{tabular}
\end{table}

\subsection{Zero-Shot Compositional Reasoning}
\label{sec:results-zeroshot}

Two-hop zero-shot accuracy is at or near chance for both variants: mean
accuracy is $1.7 \times 10^{-4}$ (real HRR) and $2.9 \times 10^{-5}$ (FHRR),
against chance $6.77 \times 10^{-5}$. We run a one-sided binomial test per
seed ($H_0$: accuracy $=$ chance). For FHRR, the null is not rejected at
$\alpha = 0.05$ in any seed (per-seed $p \in [0.34, 1.0]$). For real HRR,
the null is rejected in three of five seeds, but the absolute accuracy
($\sim 1.7 \times 10^{-4}$, a handful of the $69{,}855$ pairs) is orders of
magnitude below single-hop top-1, so the model is functionally at chance even
where the test formally rejects. Sweeping $\beta \in \{1,5,10,20,50\}$ leaves
accuracy flat near chance (Figure~\ref{fig:beta}): no temperature elicits
composition.

\begin{figure}[H]
\centering
\begin{tikzpicture}
\begin{axis}[
    width=0.85\textwidth, height=4.8cm,
    xlabel={Cleanup temperature $\beta$},
    ylabel={Zero-shot accuracy},
    xmode=log, log basis x=10,
    ymin=0, ymax=0.0009,
    xtick={1,5,10,20,50}, xticklabels={1,5,10,20,50},
    scaled y ticks=false, yticklabel style={/pgf/number format/sci},
    grid=both, grid style={gray!20},
    mark size=2.5pt,
]
\addplot[blue, thick, mark=*] coordinates {(1,0.0005)(5,0.0005)(10,0.0005)(20,0.0003)(50,0.0002)};
\addplot[red, dashed, thick] coordinates {(1,0.000068)(50,0.000068)};
\legend{Real HRR zero-shot, chance ($1/|\mathcal{E}|$)}
\end{axis}
\end{tikzpicture}
\caption{Cleanup temperature sweep (Real HRR, representative seed). Zero-shot
accuracy stays near chance across all $\beta$; no temperature elicits composition.}
\label{fig:beta}
\end{figure}

\subsection{Hop-1 Retrieval Succeeds: Localising the Failure to Hop 2}
\label{sec:results-hop1}

The most obvious explanation for compositional failure is that the model never
recovers a usable intermediate entity at hop 1, and that two-hop failure is
just hop-1 failure propagated forward. The hop-1 mid-entity retrieval probe
rules this out directly.

\begin{table}[H]
\centering
\caption{Hop-1 intermediate-entity retrieval, evaluated as a standard filtered
atomic query over the $2{,}020$ unique $(h,r_1)$ pairs in the zero-shot set.
Mean $\pm$ std over five seeds.}
\label{tab:hop1}
\begin{tabular}{lcccc}
\toprule
Model & MRR & Hits@1 & Hits@10 & Median rank \\
\midrule
Real HRR & $0.896_{\pm 0.002}$ & $0.845_{\pm 0.003}$ & $0.976_{\pm 0.005}$ & $1.0$ \\
FHRR     & $0.849_{\pm 0.034}$ & $0.777_{\pm 0.049}$ & $0.961_{\pm 0.010}$ & $1.0$ \\
\bottomrule
\end{tabular}
\end{table}

Hop-1 retrieval is excellent: median rank $1$, Hits@1 above $0.84$ for real
HRR and $0.78$ for FHRR, substantially better than the models' own atomic
test accuracy (Table~\ref{tab:atomic}, Hits@1 $\approx 0.27$/$0.26$). The
gap is expected: $(h, r_1)$ pairs in high-frequency chains tend to be
well-represented training facts, not a uniform draw from the test distribution.
Whatever causes two-hop failure, it is not an inability to identify the
intermediate entity.

We next ask whether composition succeeds \emph{conditional} on a correct
intermediate. Table~\ref{tab:conditional} shows it does not.

\begin{table}[H]
\centering
\caption{Composition accuracy conditioned on whether the deterministic
intermediate prediction $\hat m$ is a valid $r_1$-neighbour of $h$. Mean
$\pm$ std over five seeds; $n$ is the mean partition size out of $69{,}855$
quadruples. Chance is $6.77\times10^{-5}$.}
\label{tab:conditional}
\begin{tabular}{lccc}
\toprule
Model & Acc.\ $\mid \hat m$ valid & Acc.\ $\mid \hat m$ invalid & Frac.\ $\hat m$ valid \\
\midrule
Real HRR & $2.89_{\pm 1.28}\times10^{-4}$ ($n{\approx}15{,}946$) & $1.25_{\pm 0.99}\times10^{-4}$ ($n{\approx}53{,}909$) & $0.228_{\pm 0.051}$ \\
FHRR     & $7.05_{\pm 3.60}\times10^{-4}$ ($n{\approx}13{,}590$) & $0.60_{\pm 0.48}\times10^{-4}$ ($n{\approx}56{,}264$) & $0.195_{\pm 0.087}$ \\
\bottomrule
\end{tabular}
\end{table}

Composition accuracy given a valid intermediate is higher than given an invalid
one for both models. But both conditional accuracies remain within one order of
magnitude of chance, two to four orders of magnitude below atomic Hits@1.
At the per-seed level, real HRR produces as few as $1$ correct prediction out
of $15{,}815$ quadruples with a verified-correct intermediate (seed 4); FHRR
produces as few as $3$ out of $10{,}603$ (seed 42). A correct intermediate entity
is necessary but nowhere near sufficient for correct composition. The bottleneck
sits downstream of hop-1 retrieval, in the second bind-unbind-cleanup step itself.

\subsection{Hop-2 Atomic-Difficulty Probe: The Bottleneck Is Retrieval Capacity}
\label{sec:results-mechanism}

Section~\ref{sec:results-hop1} shows hop-1 succeeds and composition still fails
even with a verified-correct intermediate. Two explanations remain. One:
something specific to the composition pipeline, namely re-embedding $\hat m$,
re-binding with $\rv{r_2}$, re-unbinding from $\mem$, introduces a failure
absent from an ordinary atomic query. Two: the specific facts that
compositional chains pass through are harder for the memory to retrieve than
a typical fact, a difficulty that exists at a single hop and has nothing to do
with composition mechanics. We distinguish these using the ground-truth hop-2
atomic-difficulty probe: we take the true chain-consistent intermediate
$m^\star$ from the graph (bypassing hop 1 entirely) and pose $(m^\star, r_2)$
as a standalone atomic query, identical in form to those underlying
Table~\ref{tab:atomic}.

\begin{table}[H]
\centering
\caption{Ground-truth hop-2 facts evaluated as standalone atomic queries
(filtered top-1), compared to the model's standard atomic test accuracy on
the same checkpoints, over $200$ unique $(m^\star, r_2)$ queries. Mean $\pm$
std over five seeds.}
\label{tab:hop2atomic}
\begin{tabular}{lccc}
\toprule
Model & Standard atomic Top-1 & Hop-2 fact Top-1 & Ratio \\
\midrule
Real HRR & $0.267_{\pm 0.002}$ & $0.070_{\pm 0.016}$ & $0.26\times$ \\
FHRR     & $0.262_{\pm 0.017}$ & $0.126_{\pm 0.057}$ & $0.48\times$ \\
\bottomrule
\end{tabular}
\end{table}

The result is clear. Even with no hop-1 error and no composition pipeline at
all, accuracy on the exact facts chains depend on is degraded to a quarter
(real HRR) to a half (FHRR) of the models' accuracy on a typical atomic query.
Stratifying by relation fan-out (median split at $37.5$) shows this degradation
is uniform: real HRR scores $0.068$ top-1 on low-fan-out hop-2 facts and
$0.072$ on high-fan-out ones, which rules out fan-out as the explanation. The
failure is not concentrated in high-fan-out relations, and it is not introduced
by the composition pipeline. It is a property of which facts are hard to retrieve
from $\mem$, visible already at a single unbind operation.

\paragraph{Why these specific facts are hard.}
The facts selected by Algorithm~\ref{alg:split} are second legs of
high-support relation chains: $r_2$ relations and $m^\star$ entities that
participate in many chain instances, and therefore in many competing Hebbian
terms in $\mem$. This is consistent with the $O(\sqrt{K/D})$ cross-talk noise
scaling from Section~\ref{sec:method-algebra}: facts involving entities or
relations with higher effective participation in the superposition are recovered
with lower fidelity, even though the average fact (as sampled by the standard
test set) is recovered adequately. Composition inherits this weakness by
construction, since compositionally chained facts are precisely the
higher-degree, more contested entries in $\mem$.

\paragraph{FHRR modulus and phase probes.}
A common intuition for phasor memories under iterated binding is modulus
collapse: the softmax cleanup is a convex combination of unit phasors, so its
output can have $|z| \ll 1$, decaying over hops. This turns out not to apply
here. The mean per-component modulus of the cleaned representation is $|z| = 1.0000$
(std $0.0000$) at both hops, across all five seeds, because the FHRR cleanup
re-projects onto the unit torus by construction (Eq.~\ref{eq:cleanup-fhrr}).
Magnitude-based diagnostics are vacuous.

The relevant question is then whether phase information survives.
Figure~\ref{fig:coherence} shows phasor cosine similarity between the cleaned
two-hop representation and the true target versus a random entity. Similarity
to the true target is $-0.009 \pm 0.006$, statistically indistinguishable from
$0.005 \pm 0.003$ to a random entity: the cleaned representation retains no
directional information at the final hop. The mean absolute phase error at hop
2 is $1.59$ rad, essentially equal to the uninformative baseline
$\pi/2 \approx 1.571$ (Figure~\ref{fig:phaseerr}); circular variance is $0.99$
against a uniform value of $1$. Given the retrieval-capacity bottleneck
established in Sections~\ref{sec:results-hop1} and \ref{sec:results-mechanism},
this near-uniform phase distribution is consistent with the cleanup
non-commutativity of Lemma~\ref{lem:softmax-noncommute} acting on an already
degraded retrieval signal, though it is not uniquely caused by it.

\begin{figure}[H]
\centering
\begin{tikzpicture}
\begin{axis}[
    width=0.9\textwidth, height=5cm,
    ybar, bar width=8pt,
    ylabel={Phasor cosine similarity},
    symbolic x coords={s1, s2, s3, s4, s42},
    xtick=data, xticklabels={seed 1, seed 2, seed 3, seed 4, seed 42},
    ymin=-0.028, ymax=0.028,
    legend style={at={(0.5,1.15)}, anchor=north, legend columns=2, draw=none, font=\small},
    enlarge x limits=0.15,
    grid=both, grid style={gray!20},
]
\addplot[fill=green!55, nodes near coords,
    every node near coord/.append style={anchor=south, font=\tiny, yshift=1pt,
    /pgf/number format/fixed, /pgf/number format/precision=3}]
    coordinates {(s1,0.0013)(s2,-0.0101)(s3,-0.0140)(s4,-0.0105)(s42,-0.0127)};
\addplot[fill=red!55, nodes near coords,
    every node near coord/.append style={anchor=north, font=\tiny, yshift=-1pt,
    /pgf/number format/fixed, /pgf/number format/precision=3}]
    coordinates {(s1,0.0023)(s2,0.0000)(s3,0.0078)(s4,0.0073)(s42,0.0071)};
\draw[black, thick] (axis cs:s1,0) -- (axis cs:s42,0);
\legend{{$S_\phi$(clean, true)}, {$S_\phi$(clean, random)}}
\end{axis}
\end{tikzpicture}
\caption{FHRR phase coherence per seed: phasor cosine of the cleaned two-hop
representation to the true target (green) vs.\ a random entity (red). Both hover
near zero.}
\label{fig:coherence}
\end{figure}

\begin{figure}[H]
\centering
\begin{tikzpicture}
\begin{axis}[
    width=0.85\textwidth, height=5cm,
    ybar, bar width=14pt,
    ylabel={Value},
    symbolic x coords={Hop 1, Hop 2},
    xtick=data,
    ymin=0, ymax=1.95,
    legend style={at={(0.5,1.12)}, anchor=north, legend columns=2, draw=none, font=\small},
    enlarge x limits=0.5,
    grid=both, grid style={gray!20},
]
\addplot[fill=blue!55] coordinates {(Hop 1,1.537)(Hop 2,1.586)};
\addplot[fill=orange!70] coordinates {(Hop 1,0.973)(Hop 2,0.987)};
\draw[red, thick, dashed] (axis cs:Hop 1,1.571) -- (axis cs:Hop 2,1.571)
   node[pos=0.5, above, font=\scriptsize, red] {$\pi/2$ (uninformative)};
\legend{Mean abs.\ phase error (rad), Circular variance}
\end{axis}
\end{tikzpicture}
\caption{Phase error propagation. Mean absolute phase error sits at the
uninformative $\pi/2$ baseline at both hops; circular variance approaches $1$.}
\label{fig:phaseerr}
\end{figure}

\paragraph{Renormalisation and hard cleanup.}
Explicitly re-normalising intermediate and final representations to unit modulus
gives zero-shot accuracy $< 1 \times 10^{-5}$ (zero correct in $69{,}855$
pairs). Replacing soft cleanup with hard argmax at both hops likewise gives
zero correct answers. Neither intervention is expected to help under the
retrieval-capacity explanation, since neither changes which facts are weakly
represented in $\mem$.

\paragraph{How atomic ranking survives the same capacity limit that defeats composition.}
FHRR achieves MRR $0.350$ overall while hop-2 fact top-1 accuracy is only
$0.126$. Here is the resolution: the atomic softmax margin (gold logit minus
the largest incorrect logit) is negative on average ($-1.82$ for FHRR, $-1.92$
for real HRR), and the gold entity is top-1 only $\approx 13$ to $16\%$ of the
time on the full test set, consistent with Table~\ref{tab:atomic}. Yet MRR
stays high because the gold entity is reliably ranked among the top few even
when it is not first. Composition has no such tolerance: a near-miss in the
final ranking is scored as a hard failure. The same capacity limitation that
costs ordinary atomic queries a few ranks costs compositional queries the
entire prediction.

\paragraph{Untrained control.}
A freshly initialised FHRR has atomic accuracy $\approx 0$ while the trained
model reaches $0.126 \pm 0.001$; training helps single-hop retrieval. Yet both
are equally at chance on two-hop composition. Training buys average atomic
competence but does not differentially improve retrieval of the specific,
higher-contention facts composition depends on.

% ============================================================
\section{Discussion}
\label{sec:discussion}
% ============================================================

The central finding is that zero-shot compositional failure in holographic
memory is a single-hop retrieval-capacity problem, not a multi-hop or
cleanup-algebra problem. Hop-1 retrieval succeeds with high fidelity; composition
fails even with a correct intermediate; and the ground-truth hop-2 fact, posed
as a standalone atomic query with no composition involved, is already degraded
to $0.26$ to $0.48\times$ the typical atomic accuracy, uniformly across relation
fan-out. The facts compositional reasoning depends on are, by the same construction
that makes them compositionally useful, exactly the higher-degree facts that
suffer the most cross-talk under superposition. The right question for future
work is therefore not whether a cleanup operator can be made compositionally
consistent, but whether a fixed-width superposed memory can retrieve
high-degree facts with high fidelity at all, before any query composition happens.

Lemma~\ref{lem:softmax-noncommute} identifies a genuine structural property of
the softmax-cleanup-plus-phase-reprojection pipeline in FHRR, and the phase
probes show it plausibly compounds the primary failure on the roughly $20\%$ of
chains where hop-1 retrieval is imperfect. But it is not the primary cause. The
dominant contributor is a retrieval-capacity effect already measurable at a
single hop, before cleanup non-commutativity has a chance to act.

Three intuitions deserve caution in light of these results. First, competitive
single-hop \emph{aggregate} performance does not imply that any individual
fact, especially the high-degree facts compositional reasoning depends on, is
retrieved reliably; aggregate metrics like MRR can mask substantial per-fact
variation. Second, the modulus collapse intuition does not apply once cleanup
re-normalises onto the unit torus; the relevant question then is whether
retrieval succeeds upstream of cleanup. Third, and most concretely, a correct
intermediate entity is not informative about whether composition will succeed.
The conditional accuracies in Table~\ref{tab:conditional} differ by less than a
factor of three between the valid- and invalid-intermediate partitions, a much
smaller gap than one might expect if intermediate correctness were the dominant
factor.

% ============================================================
\section{Limitations}
\label{sec:limitations}
% ============================================================

This study evaluates a specific pipeline on a single benchmark. We test only
two-hop chains; the retrieval-capacity effect identified here should compound
with chain length, since each additional hop queries another potentially
high-contention fact. Results on sparser graphs (e.g.\
WN18RR~\cite{dettmers2018wn18rr}) or under explicit multi-hop protocols (e.g.\
NELL-995~\cite{xiong2017deeppath}) may differ quantitatively; we predict the
capacity effect to be less severe there, given fewer superposed facts per
dimension. We do not empirically test a phase-equivariant cleanup or a memory
architecture with explicitly allocated capacity, such as per-relation
sub-memories, that might reduce the cross-talk effect. The hop-2
atomic-difficulty probe uses $200$ unique $(m^\star, r_2)$ queries, fewer than
the $20{,}466$-query standard test set; the effect size is large and consistent
across all five seeds, but a larger replication would sharpen the estimate. The
training objective targets single-hop retrieval; an objective that explicitly
up-weights high-degree or chain-relevant facts might partially compensate,
though we have not tested this.

% ============================================================
\section{Conclusion}
\label{sec:conclusion}
% ============================================================

Holographic reduced representations paired with modern Hopfield cleanup are
competitive single-hop retrievers on FB15k-237 but fail completely at zero-shot
two-hop composition. Two targeted mechanistic probes localise the failure: the
intermediate entity is retrieved with high fidelity at hop 1, yet composition
fails even when that intermediate is verified correct, and the ground-truth
second-hop fact, evaluated as a standalone atomic query with no composition
involved, is retrieved at only $0.26$ to $0.48\times$ the typical atomic
accuracy, uniformly across relation fan-out. The bottleneck is retrieval
capacity under superposition for the specific, higher-contention facts
compositional chains depend on, not the bind-unbind algebra or the cleanup
step. We additionally prove (Lemma~\ref{lem:softmax-noncommute}) that FHRR's
softmax cleanup is not phase-equivariant, a real but secondary effect that
compounds the primary failure only when hop-1 retrieval itself errs. Fixing
zero-shot composition in holographic memory requires addressing retrieval
fidelity for high-degree facts under superposition, through explicit capacity
allocation or chain-aware training, rather than cleanup-operator redesign alone.

% ============================================================
\section*{Reproducibility Statement}
% ============================================================
All results derive from five fixed-seed checkpoints under the leakage-controlled
protocol of Algorithm~\ref{alg:split}. Atomic metrics use the filtered
convention on the full $20{,}466$-query test set. The hop-1 retrieval probe,
the composition-conditional probe, and the hop-2 atomic-difficulty probe
(Section~\ref{sec:probes-mechanism}) are all inference-only over the same five
checkpoints, with no additional training. Phase probes, modulus probes, softmax
margins, temperature sweeps, binomial tests, and untrained controls are likewise
computed by inference over stored checkpoints. Code, configurations, and all
probe scripts are released at
\url{https://github.com/iamhero2709/holographic-memory}.

% ============================================================

\end{document}